\documentclass{article}
\usepackage{nips12submit_e,times}
\usepackage[numbers]{natbib}
\usepackage{graphicx}
\usepackage{amsmath}
\usepackage{amssymb}
\usepackage{amsthm}
\usepackage{float}
\usepackage{wrapfig}
\usepackage{listings}
\usepackage[font=small,labelfont=bf,tableposition=top]{caption}
\usepackage[font=footnotesize]{subcaption}
\usepackage{algorithm}
\usepackage{algorithmic}

\DeclareGraphicsExtensions{.pdf}
\DeclareMathOperator*{\argmin}{arg\,min}

\floatstyle{ruled}
\newfloat{algorithm}{tbp}{loa}
\providecommand{\algorithmname}{Algorithm}
\floatname{algorithm}{\protect\algorithmname}

\theoremstyle{plain}
\newtheorem*{prop*}{\protect\propositionname}
\newtheorem{prop}{\protect\propositionname}

\lstset{language=Python}

\title{A Convex Formulation for Learning Scale-Free Networks via Submodular Relaxation}

\author{
Aaron J. Defazio\\
NICTA/Australian National University\\
Canberra, ACT, Australia\\
\texttt{aaron.defazio@anu.edu.au}\\
\And
Tiberio S. Caetano\\
NICTA/ANU/University of Sydney\\
Canberra and Sydney, Australia\\
\texttt{tiberio.caetano@nicta.com.au}\\
}

\addtolength{\floatsep}{-10mm}

\providecommand{\propositionname}{Proposition}

\nipsfinalcopy 

\begin{document}

\maketitle

\vspace{-2mm}

\begin{abstract}
	A key problem in statistics and machine learning is the determination of network structure from data. We consider the case where the structure of the graph to be reconstructed is known to be scale-free. We show that in such cases it is natural to formulate structured sparsity inducing priors using submodular functions, and we use their Lov\'asz extension to obtain a convex relaxation. For tractable classes such as Gaussian graphical models, this leads to a convex optimization problem that can be efficiently solved. We show that our method results in an improvement in the accuracy of reconstructed networks for synthetic data. We also show how our prior encourages scale-free reconstructions on a bioinfomatics dataset.
\end{abstract}


\section*{Introduction}
\label{sec:introduction}
Structure learning for graphical models is a problem that arises in many contexts. In applied statistics, undirected graphical models can be used as a tool for understanding the underlying conditional independence relations between variables in a dataset. For example, in bioinfomatics Gaussian graphical models are fitted to data resulting from micro-array experiments, where the fitted graph can be interpreted as a gene expression network \cite{gene-expression-2004}.

In the context of Gaussian models, the structure learning problem is known as covariance selection \citep{dempster}. The most common approach is the application of sparsity inducing regularization to the maximum likelihood objective. There is a significant body of literature, more than 30 papers by our count, on various methods of optimizing the $L_1$ regularized covariance selection objective alone (see the recent review by \citet{katya}).

Recent research has seen the development of \emph{structured} sparsity, where more complex prior knowledge about a sparsity pattern can be encoded. Examples include group sparsity \cite{group-sparsity}, where parameters are linked so that they are regularized in groups. More complex sparsity patterns, such as region shape constraints in the case of pixels in an image \cite{sparse-pca}, or hierarchical constraints \cite{hierarchical-sparse} have also been explored.

In this paper, we study the problem of recovering the structure of a Gaussian graphical model under the assumption that the graph recovered should be scale-free. Many real-world networks are known a priori to be scale-free and therefore enforcing that knowledge through a prior seems a natural idea. Recent work has offered an approach to deal with this problem which results in a non-convex formulation \cite{reweighted-l1}. Here we present a convex formulation. We show that scale-free networks can be induced by enforcing submodular priors on the network's degree distribution, and then using their convex envelope (the Lov\'asz extension) as a convex relaxation \cite{subbach}. 
The resulting relaxed prior has an interesting non-differentiable structure, which poses challenges to optimization. We outline a few options for solving the optimisation problem via proximal operators \cite{sparsity-opt}, in particular an efficient dual decomposition method. Experiments on both synthetic data produced by scale-free network models and a real bioinformatics dataset suggest that the convex relaxation is not weak: we can infer scale-free networks with similar or superior accuracy than in \cite{reweighted-l1}.


\section{Combinatorial Objective}
\label{sec:objective}
Consider an undirected graph with edge set $E$ and node set $V$, where $n$ is the number of nodes. We denote the degree of node $v$ as $d_{E}(v)$, and the complete graph with $n$ nodes as $K_n$. We are concerned with placing priors on the degree distributions of graphs such as $(V,E)$. By degree distribution, we mean the bag of degrees $\left\{ d_{E}(v) | v \in V \right\}$.

A natural prior on degree distributions can be formed from the family of exponential random graphs \cite{exp-graphs}. Exponential random graph (ERG) models assign a probability to each $n$ node graph using an exponential family model. The probability of each graph depends on a small set of sufficient statistics, in our case we only consider the degree statistics. A ERG distribution with degree parametrization takes the form:
\begin{gather}
	p(G=(V,E);h) 
				\approx \frac{1}{Z(h)} 
				\exp{\left [\, -\sum_{v\in V} h(d_{E}(v)) \, \right ]},
\end{gather}

The degree weighting function $h:\mathbb{Z}^{+}\rightarrow\mathbb{R}$
encodes the preference for each particular degree. The function $Z$ is chosen so that the distribution is correctly normalized over $n$ node graphs.

A number of choices for $h$ are reasonable; A geometric series $h(i)\propto 1 - \alpha^{i}$ with $\alpha \in (0,1)$ has been proposed by \citet{geo-erg} and has been widely adopted. However for encouraging scale free graphs we require a more rapidly increasing sequence. It is instructive to observe that, under the strong assumption that each node's degree is independent of the rest, $h$ grows logarithmically. To see this, take a scale free model with scale $\alpha$; the joint distribution takes the form:
\begin{gather*}
	p(G=(V,E);\epsilon, \alpha) 
				\approx \frac{1}{Z(\epsilon, \alpha)} 
				\prod_{v \in V} (d_{E}(v)+\epsilon)^{-\alpha},
\end{gather*}
where $\epsilon>0$ is added to prevent infinite weights. Putting this into ERG form gives the weight sequence $h(i)=\alpha \log(i + \epsilon)$. We will consider this and other functions $h$ in Section \ref{sec:degree-priors}. We intend to perform maximum a posteriori (MAP) estimation of a graph structure using such a distribution as a prior, so the object of our attention is the negative log-posterior, which we denote $F$:
\begin{align}
	F(E) = \sum_{v \in V} h(d_{E}(v)) + \text{const.}
\end{align}
So far we have defined a function on edge sets only, however in practice we want to optimize over a weighted graph, which is intractable when using discontinuous functions such as $F$. We now consider the properties of $h$ that lead to a convex relaxation of $F$.

\section{Submodularity}
\label{sec:submodularity}
A set function $F:\,2^{E}\rightarrow\mathbb{R}$ on $E$ is a non-decreasing submodular function if for all $A\subset B\subset E$ and $x\in E\backslash B$ the following conditions hold:
\begin{align}
 F(A\cup\{x\})-F(A) & \ge F(B\cup\{x\})-F(B) \tag{submodularity} \\
 \text{and } F(A) & \leq  F(B). \tag{non-decreasing}
\end{align}
The first condition can be interpreted as a diminishing returns condition;
adding $x$ to a set $A$ increases $F(A)$ by more than adding it
to a larger set $B$, if $B$ contains $A$.

We now consider a set of conditions that can be placed on $h$ so that $F$ is submodular.
\begin{prop}
\label{prop:h}
Denote $h$ as tractable if $h$ is non-decreasing, concave and $h(0)=0$.
For tractable $h$, $F$ is a non-decreasing submodular function.\end{prop}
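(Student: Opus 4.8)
The plan is to verify the two defining properties directly from the additive, per-vertex structure of $F$, reducing each to an elementary property of concave sequences. The whole argument rests on a single observation about $h$, which I would isolate first: since $h$ is concave (as a function on the non-negative integers), its first differences
\[
\delta(i) \;:=\; h(i+1) - h(i)
\]
are non-increasing in $i$, i.e. $i \le j$ implies $\delta(i) \ge \delta(j)$. This follows from concavity by comparing two chords of equal unit length. Combined with $h$ non-decreasing this also gives $\delta(i) \ge 0$ for all $i$, and together with $h(0)=0$ it gives $h \ge 0$ and the normalization $F(\emptyset) = 0$.

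For the non-decreasing property, note that whenever $A \subseteq B$ every node satisfies $d_A(v) \le d_B(v)$, because $B$ contains all the edges incident to $v$ in $A$. Applying $h$ non-decreasing term by term then gives $F(A) = \sum_{v} h(d_A(v)) \le \sum_{v} h(d_B(v)) = F(B)$.

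For submodularity, fix $A \subseteq B \subseteq E$ and an edge $x = \{u,w\} \in E\setminus B$ (so $u \ne w$, and $x \notin A$). Adding $x$ raises the degree of exactly the two endpoints $u,w$ by one and leaves all other degrees unchanged, so the marginal gain telescopes into two vertex contributions:
\[
F(A\cup\{x\}) - F(A) \;=\; \delta\bigl(d_A(u)\bigr) + \delta\bigl(d_A(w)\bigr),
\]
and likewise $F(B\cup\{x\}) - F(B) = \delta(d_B(u)) + \delta(d_B(w))$. Since $A \subseteq B$ forces $d_A(u) \le d_B(u)$ and $d_A(w) \le d_B(w)$, the monotonicity of $\delta$ yields $\delta(d_A(u)) \ge \delta(d_B(u))$ and $\delta(d_A(w)) \ge \delta(d_B(w))$; summing these two inequalities gives $F(A\cup\{x\}) - F(A) \ge F(B\cup\{x\}) - F(B)$, which is exactly the submodular inequality.

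I expect no serious obstacle; the only points that need a little care are (i) pinning down the discrete notion of concavity for $h:\mathbb{Z}^{+}\rightarrow\mathbb{R}$ and giving the ``equal-length chords'' argument for non-increasing first differences, and (ii) observing that a single edge affects precisely two of the vertex terms, so that the marginal gains decompose cleanly into a sum of $\delta$-increments — everything else is termwise monotonicity.
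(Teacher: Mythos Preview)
Your argument is correct. The paper, however, takes a more compositional route: it observes that the degree $d_E(v)$ is a set-cardinality function and hence modular, then invokes the general fact that a concave transformation of a modular function is submodular, and finally that a sum of submodular functions is submodular. Your proof is essentially an unpacking of that middle lemma in this specific setting: the non-increasing first differences $\delta(i)$ are exactly what make ``concave of modular'' submodular, and your decomposition of the marginal gain into the two endpoint terms is the ``sum of submodulars'' step made explicit. The paper's version is terser and situates the result within standard submodular machinery; yours is self-contained and exposes precisely which feature of $h$ (diminishing increments) drives the inequality, which is arguably more informative for a reader who has not internalized the cited closure properties.
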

\begin{proof}
First note that the degree function is a set cardinality function,
and hence modular. A concave transformation of a modular function
is submodular \citep{bach-tutorial}, and the sum of submodular functions
is submodular. 
\end{proof} 
The concavity restriction we impose on $h$ is the key ingredient that allows us to use submodularity to enforce a prior for scale-free networks; any prior favouring long tailed degree distributions must place a lower weight on new edges joining highly connected nodes than on those joining other nodes. As far as we are aware, this is a novel way of mathematically modelling the `preferential attachment' rule  \cite{ba-model} that gives rise to scale-free networks: through non-decreasing submodular functions on the degree distribution.

Let $X$ denote a symmetric matrix of edge weights. A natural convex relaxation of $F$ would be the convex envelope of $F(\textrm{Supp}(X))$ under some restricted domain. For tractable $h$, we have by construction that $F$ satisfies the conditions of Proposition 1 in \cite{subbach}, so that the convex envelope of $F(\textrm{Supp}(X))$ on the $L_{\infty}$ ball is precisely the Lov\'asz extension evaluated on $|X|$. The Lov\'asz extension for our function is easy to determine as it is a sum of ``functions of cardinality'' which are considered in \cite{subbach}. Below is the result from \cite{subbach} adapted to our problem.
\begin{prop}
\label{prop:convex-envelope}
Let $X_{i,(j)}$ be the weight of the $j$th edge connected to $i$, under a decreasing ordering by absolute value (i.e $|X_{i,(0)}|\geq|X_{i,(1)}|\geq...\geq|X_{i,(n-1)}|$). The notation $(i)$ maps from sorted order to the natural ordering, with the diagonal not included. Then the convex envelope of $F$ for tractable $h$ over the $L_{\infty}$ norm unit ball is:
\begin{gather*}
\Omega(X) = \sum_{i=0}^{n}\,\sum_{k=0}^{n-1}\left(h(k+1)-h(k)\right)\lvert X_{i,(k)} \rvert.
\end{gather*}
This function is piece-wise linear and convex.
\end{prop}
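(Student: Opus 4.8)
The plan is to treat $\Omega$ purely as an instance of the general submodular-relaxation machinery already invoked in the text, so almost all of the work is bookkeeping rather than new mathematics. First I would record that, by Proposition~\ref{prop:h}, tractability of $h$ makes $F$ non-decreasing and submodular, and that $h(0)=0$ forces $F(\emptyset)=\sum_{v\in V}h(0)=0$. These are exactly the hypotheses needed for Proposition~1 of \cite{subbach}, whose conclusion is that the convex envelope of $X\mapsto F(\mathrm{Supp}(X))$ on the unit $L_\infty$ ball equals $f(|X|)$, where $f$ denotes the Lov\'asz extension of $F$ and $|X|$ is the entrywise absolute value. So it remains to (i) put $f(|X|)$ in closed form and (ii) verify the two structural claims.

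For (i), I would use that $F(E)=\sum_{v\in V}h(d_E(v))$ is a sum of the set functions $F_v(A)=h(|A\cap\delta(v)|)$, where $\delta(v)$ is the set of edges incident to $v$, and that the Lov\'asz extension is additive over sums, so $f(|X|)=\sum_{v}f_v(|X|)$. Each $F_v$ depends on its argument only through the cardinality of its restriction to $\delta(v)$, i.e.\ it is a ``function of cardinality'' in the terminology of \cite{subbach}. Applying the greedy (Edmonds) formula for the Lov\'asz extension to $F_v$ at the nonnegative vector $|X|$: sort the incident weights in decreasing order as $|X_{i,(0)}|\ge|X_{i,(1)}|\ge\cdots$, observe that the marginal increments of $F_v$ along this chain are $h(k+1)-h(k)$, and the greedy sum telescopes to $f_v(|X|)=\sum_{k}\bigl(h(k+1)-h(k)\bigr)|X_{i,(k)}|$. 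Summing over $v$ reproduces the stated $\Omega(X)$, with each edge contributing once through each of its two endpoints. (One could instead simply quote the cardinality-function example computed in \cite{subbach}.)

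For (ii), piecewise-linearity is immediate: on each polyhedral cone where the sorting maps $(i)$ are fixed, $\Omega$ is a single linear form in the entries $|X_{i,j}|$, and there are only finitely many such cones. Convexity can be obtained by invoking the classical Lov\'asz equivalence between submodularity of $F$ and convexity of its extension (together with coordinatewise convexity of $X\mapsto|X|$). For a self-contained version I would set $S_{i,k}(X)=\sum_{j=0}^{k}|X_{i,(j)}|$, the sum of the $k{+}1$ largest absolute weights at node $i$, which is a maximum of linear forms and hence convex, and rewrite the inner sum by Abel summation as a combination of the $S_{i,k}$ whose coefficients are the first differences $h(k{+}1)-h(k)\ge 0$ (at the boundary) and the second differences $2h(k{+}1)-h(k)-h(k{+}2)\ge 0$ (in the interior); nonnegativity of the former is monotonicity of $h$ and of the latter is exactly concavity of $h$. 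Hence $\Omega$ is a nonnegative combination of convex functions.

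The main obstacle is not conceptual but one of care: verifying that $F$ really satisfies the precise hypotheses under which the convex envelope on the $L_\infty$ ball coincides with $f(|\cdot|)$ (this is where $h(0)=0$ is essential), and keeping the sorted-index notation $(i)$ consistent between the greedy computation of each $f_v$ and the final double sum, including the harmless double-counting of each edge. Everything else is routine.
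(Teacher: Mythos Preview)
Your proposal is correct and follows essentially the same route as the paper: invoke Proposition~\ref{prop:h} together with Proposition~1 of \cite{subbach} to identify the convex envelope with the Lov\'asz extension on $|X|$, and then read off the closed form from the ``functions of cardinality'' case in \cite{subbach}. The paper simply cites \cite{subbach} for both steps, whereas you spell out the greedy/Edmonds computation and add a self-contained Abel-summation argument for convexity; these are correct and useful elaborations but not a different strategy.
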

The form of $\Omega$ is quite intuitive. It behaves like a $L_1$
norm with an additional weight on each edge that depends on how the
edge ranks with respect to the other edges of its neighbouring nodes.

\section{Optimization}
\label{sec:optimization}
We are interested in using $\Omega$ as a prior, for optimizations of the form
\[
\textrm{minimize}_{X}\quad f(X)=g(X)+\alpha\Omega(X),
\]
for convex functions $g$ and prior strength parameters $\alpha\in\mathbb{R}^{+}$,
over symmetric $X$. We will focus on the simplest structure learning problem that occurs in graphical model training, that of Gaussian models. In which case we have
\[
g(X)=\left\langle X,C\right\rangle -\log\det X,
\]
where $C$ is the observed covariance matrix of our data. The support of $X$ will then be the set of edges in the undirected graphical model together with the node precisions. This function is a rescaling of the maximum likelihood objective. In order for the resulting $X$ to define a normalizable distribution, $X$ must be restricted to the cone of positive definite matrices. This is not a problem in practice as $g(X)$ is infinite on the boundary of the PSD cone, and hence the constraint can be handled by restricting optimization steps to the interior of the cone. In fact $X$ can be shown to be in a strictly smaller cone, $X^{*}\succeq aI$, for $a$ derivable from $C$ \citep{smooth-covsel}. This restricted domain is useful as $g(X)$ has Lipschitz continuous gradients over $X\succeq aI$ but not over all positive definite matrices \citep{cov-admm}.

There are a number of possible algorithms that can be applied for optimizing a convex non-differentiable objective such as $f$. \citet{subbach} suggests two approaches to optimizing functions involving submodular relaxation priors; a subgradient approach and a proximal approach.

Subgradient methods are the simplest class of methods for optimizing non-smooth convex functions. They provide a good baseline for comparison with other methods. For our objective, a subgradient is simple to evaluate at any point, due to the piecewise continuous nature of $\Omega(X)$. Unfortunately (primal) subgradient methods for our problem will not return sparse solutions except in the limit of convergence. They will instead give intermediate values that oscillate around their limiting values.

An alternative is the use of proximal methods \cite{subbach}. Proximal methods exhibit superior convergence in comparison to subgradient methods, and produce sparse solutions. Proximal methods rely on solving a simpler optimization problem, known as the \textit{proximal} operator at each iteration:
\[
\argmin_{X}\, \left[ \alpha\Omega(X) + \frac{1}{2} \left\Vert X-Z\right\Vert _{2}^{2} \right],
\]
where $Z$ is a variable that varies at each iteration. For many problems of interest, the proximal operator can be evaluated using a closed form solution. For non-decreasing submodular relaxations, the proximal operator can be evaluated by solving a submodular minimization on a related (not necessarily non-decreasing) submodular function \citep{subbach}.

\citet{subbach} considers several example problems where the proximal operator can be evaluated using fast graph cut methods. For the class of functions we consider, graph-cut methods are not applicable. Generic submodular minimization algorithms could be as slow as $O(n^{12})$ for a $n$-vertex graph, which is clearly impractical \cite{submod-book}. We will instead propose a dual decomposition method for solving this proximal operator problem in Section \ref{sec:proximal}. 

For solving our optimisation problem, instead of using the standard proximal method (sometimes known as ISTA), which involves a gradient step followed by the proximal operator, we propose to use the alternating direction method of multipliers (ADMM), which has shown good results when applied to the standard $L_1$ regularized covariance selection problem \cite{cov-admm}. Next we show how to apply ADMM to our problem.

\subsection{Alternating direction method of multipliers}
\label{sec:admm}
The alternating direction method of multipliers (ADMM, \citet{admm})
is one approach to optimizing our objective that has a number of advantages
over the basic proximal method. Let $U$ be the matrix of dual variables for the decoupled problem:
\begin{gather*}
\text{minimize}_{X} \quad g(X)+\alpha\Omega(Y), \\
s.t.\quad X=Y.
\end{gather*}
Following the presentation of the algorithm in \citet{admm}, given the values $Y^{(l)}$ and $U^{(l)}$ from iteration $l$, with $U^{(0)}=0_n$ and $Y^{(0)}=I_n$ the ADMM updates for iteration $l+1$ are:
\begin{align*}
X^{(l+1)} & = \argmin_{X}\left[\left\langle X,C\right\rangle -\log\det X
			+ \frac{\rho}{2}||X-Y^{(l)}+U^{(l)} ||_{2}^{2}\right] \\
Y^{(l+1)} & = \argmin_{Y}\left[\alpha\Omega(Y) 
			+ \frac{\rho}{2}||X^{(l+1)}-Y+U^{(l)} ||_{2}^{2}\right] \\
U^{(l+1)} & = U^{(l)}+X^{(l+1)}-Y^{(l+1)},
\end{align*}
where $\rho>0$ is a fixed step-size parameter (we used $\rho=0.5$). The advantage of this form is that both the $X$ and $Y$ updates are a proximal operation. It turns out that the proximal operator for $g$ (i.e. the $X^{(l+1)}$ update)  actually has a simple solution \citep{cov-admm} that can be computed by taking an eigenvalue decomposition
$Q^{T}\Lambda Q=\rho(Y-U)-C$, where $\Lambda=\textrm{diag}(\lambda_{1},\dots,\lambda_{n})$
and updating the eigenvalues using the formula
\[
\lambda_{i}^{\prime}:=\frac{\lambda_{i}+\sqrt{\lambda_{i}^{2}+4\rho}}{2\rho}
\]
to give $X=Q^{T}\Lambda^{\prime}Q$. The stopping criterion we used was $||X^{(l+1)}-Y^{(l+1)}||<\epsilon$ and $||Y^{(l+1)}-Y^{(l)}||<\epsilon$. In practice the ADMM method is one of the fastest methods for $L_1$ regularized covariance selection. \citet{cov-admm} show that convergence is guaranteed if additional cone restrictions are placed on the minimization with respect to $X$, and small enough step sizes are used. For our degree prior regularizer, the difficultly is in computing the proximal operator for $\Omega$, as the rest of the algorithm is identical to that presented in \citet{admm}. We now show how we solve the problem of computing the proximal operator for $\Omega$.

\subsection{Proximal operator using dual decomposition}
\label{sec:proximal}
Here we describe the optimisation algorithm that we effectively use for computing the proximal operator. The regularizer $\Omega$ has a quite complicated structure due to
the interplay between the terms involving the two end points for each
edge. We can decouple these terms using the dual decomposition technique,
by writing the proximal operation for a given $Z=Y-U$ as:
\begin{gather*}
\textrm{minimize}_{X}=\frac{\alpha}{\rho}\sum_{i}^{n}\,\sum_{k}^{n-1}\left(h(k+1)-h(k)\right)\left|X_{i,(k)}\right|+\frac{1}{2}||X-Z||_{2}^{2} \\
s.t.\quad X=X^{T}.
\end{gather*}
The only difference so far is that we have made the symmetry constraint
explicit. Taking the dual gives a formulation where the upper and
lower triangle are treated as separate variables. The dual variable
matrix $V$ corresponds to the Lagrange multipliers of the symmetry
constraint, which for notational convenience we store in an anti-symmetric
matrix. The dual decomposition method is given in Algorithm \ref{alg:dd-main}.
\begin{algorithm}[!h]
    	\small
      \centering
\begin{algorithmic}
	\STATE {\bfseries input:} matrix $Z$, constants $\alpha$, $\rho$
	\STATE {\bfseries input:} step-size $0<\eta<1$
	\STATE {\bfseries initialize:} $X=Z$
	\STATE {\bfseries initialize:} $V=0_n$
	\REPEAT
		\FOR{$l=0$ {\bfseries until} $n-1$}
			\STATE $X_{l*} = \text{solveSubproblem}(Z_{l*}, V_{l*})$ \emph{\# Algorithm  \ref{alg:dd-subproblem}}
		\ENDFOR
		\STATE $V = V + \eta(X - X^T)$
	\UNTIL {$||X-X^T|| < 10^{-6}$}
	\STATE $X = \frac{1}{2}(X+X^T)$ \emph{\# symmetrize}
	\STATE {\bfseries round:} any $|X_{ij}| < 10^{-15}$ to $0$
	\STATE {\bfseries return} X
\end{algorithmic}
\caption{\label{alg:dd-main}Dual decomposition main}
\end{algorithm}

We use the notation $X_{i*}$ to denote the $i$th row of $X$. Since this is a dual method, the primal variables $X$ are not feasible (i.e. symmetric) until convergence.
Essentially we have decomposed the original problem, so that now we
only need to solve the proximal operation for each node in isolation, namely the subproblems:
\begin{gather}
\label{eq:dd}
\forall i.\; X^{(l+1)}_{i*}=\arg\min_{x}\,
\frac{\alpha}{\rho}\sum_{k}^{n-1}\left(h(k+1)-h(k)\right)\left|x_{(k)}\right| 
+ ||x-Z_{i*}+V_{i*}^{(l)}||_{2}^{2}.
\end{gather}
Note that the dual variable has been integrated into the quadratic
term by completing the square. As the diagonal elements of $X$ are
not included in the sort ordering, they will be minimized by $X_{ii}=Z_{ii},$ for all $i$. Each subproblem is strongly convex as they consist of convex terms plus a positive quadratic term. This implies that the dual problem is differentiable (as the subdifferential contains only one subgradient), hence the $V$ update is actually gradient ascent. Since a fixed step size is used, and the dual is Lipschitz continuous,
for sufficiently small step-size convergence is guaranteed. In practice we used $\eta=0.9$ for all our tests.

This dual decomposition subproblem can also be interpreted as just
a step within the ADMM framework. If applied in a standard way, only
one dual variable update would be performed before another expensive
eigenvalue decomposition step. Since each iteration of the dual decomposition
is much faster than the eigenvalue decomposition, it makes more sense
to treat it as a separate problem as we propose here. It also ensures
that the eigenvalue decomposition is only performed on symmetric matrices.

Each subproblem in our decomposition is still a non-trivial problem. They do have a closed form solution, involving a sort and several passes over the node's edges, as described in Algorithm \ref{alg:dd-subproblem}.
\vspace{2mm}
\begin{prop}
Algorithm \ref{alg:dd-subproblem} solves the subproblem in equation \ref{eq:dd}.
\end{prop}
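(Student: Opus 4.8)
The plan is to recognize the subproblem \eqref{eq:dd} as the proximal operator of an ordered weighted $\ell_1$ (``sorted $\ell_1$'') norm applied row by row, and then to check that Algorithm~\ref{alg:dd-subproblem} carries out the standard reduction of that proximal operator to an isotonic regression. Fix a node $i$, write $c = Z_{i*} - V_{i*}^{(l)}$ and $w_k = \tfrac{\alpha}{\rho}\bigl(h(k+1)-h(k)\bigr)$. Since $h$ is concave the weights $w_k$ are nonnegative and nonincreasing, so $x \mapsto \sum_k w_k \lvert x_{(k)} \rvert$ is the supremum of the linear maps $x \mapsto \sum_k w_k \lvert x_{\pi(k)} \rvert$ over coordinate permutations $\pi$ (rearrangement inequality), hence convex; adding the quadratic term makes the subproblem strongly convex with a unique minimizer $x^\star$, which is what we must show the algorithm returns.

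First I would reduce to nonnegative, sorted coordinates. The diagonal entry carries no penalty, so $x^\star_{ii} = c_{ii}$, and it can be set aside. For an off-diagonal coordinate $j$, replacing $x_j$ by $\mathrm{sign}(c_j)\lvert x_j\rvert$ leaves the penalty unchanged and does not increase $\lVert x - c\rVert_2^2$, so $x^\star_j$ agrees in sign with $c_j$ and we may solve for the magnitudes $u_j = \lvert x^\star_j\rvert \ge 0$ with data $b_j = \lvert c_j\rvert$. A standard exchange argument then shows that the decreasing order of $u^\star$ matches that of $b$: if $b_p \ge b_q$ but $u^\star_p < u^\star_q$, swapping these two values strictly decreases $\lVert u - b\rVert_2^2$ while leaving the multiset $\{u_j\}$ — and hence $\sum_k w_k u_{(k)}$, a fixed decreasing-weight combination of that multiset — unchanged, contradicting optimality. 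So, after sorting $b$ in decreasing order, it suffices to minimize over $u_1 \ge u_2 \ge \cdots \ge 0$.

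With the order fixed the penalty becomes the \emph{linear} form $\sum_k w_k u_k$, and completing the square rewrites the subproblem as the Euclidean projection of the vector with entries $b_k - \tfrac12 w_k$ onto $\{u : u_1 \ge u_2 \ge \cdots \ge 0\}$. This projection is computed exactly by the pool-adjacent-violators procedure (merging adjacent out-of-order blocks and replacing each by its mean, then clipping the trailing negative block to zero) — precisely the ``sort and several passes over the node's edges'' of Algorithm~\ref{alg:dd-subproblem}; the proof finishes by matching the passes of the algorithm to the PAVA block merges and then undoing the sort and restoring the signs of $c$. The main obstacle I anticipate is in that last step: rigorously checking, pass by pass, that the algorithm's averaging and clipping produce the unique minimizer of the constrained projection (equivalently, that its output satisfies the KKT conditions of the isotonic problem, with correct handling of ties, of the nonnegativity constraint, and of the excluded diagonal). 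The reduction to isotonic regression is the conceptual heart and is essentially routine for OWL-type prox operators; the remaining work is bookkeeping.
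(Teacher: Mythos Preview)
Your proposal is correct and matches the paper's approach: the paper's proof (relegated to supplementary material) likewise hinges on the grouping-and-averaging of coordinates at non-differentiable points, which is exactly the pool-adjacent-violators mechanism you identify after the sign/permutation reduction. Your framing via the ordered weighted $\ell_1$ prox and isotonic regression is the standard modern packaging of the same argument; just be careful with the factor of $\tfrac12$ when matching to the algorithm, since equation~\eqref{eq:dd} as printed omits the $\tfrac12$ in front of the quadratic that is present in the ADMM $Y$-update and in Algorithm~\ref{alg:dd-subproblem}'s shrinkage step.
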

\vspace{-0.75em}
\emph{Proof:} See Appendix 1 in the supplementary material. The main subtlety is the grouping together of elements induced at the non-differentiable points. If multiple edges connected to the same node have the same absolute value, their subdifferential becomes the same, and they behave as a single point whose weight is the average. To handle this grouping, we use a disjoint-set data-structure, where each $x_{j}$ is either in a singleton set, or grouped in a set with other elements, whose absolute value is the same.

\begin{algorithm}[t]
    	\small
      \centering
\begin{algorithmic}
	\STATE {\bfseries input:} vectors $z$, $v$
	\STATE {\bfseries initialize:} Disjoint-set datastructure 
		with set membership function $\gamma$
	\STATE $w = z-v\quad$ \emph{\# $w$ gives the sort order}
	\STATE $u = 0_n$
	\STATE {\bfseries build:} sorted-to-original position function $\mu$
			under descending absolute value order of $w$, excluding the diagonal
	\FOR{$k=0$ {\bfseries until} $n-1$}
		\STATE $j = \mu(k)$
		\STATE $u_j = |w_j| -  \frac{\alpha}{\rho} \left(h(k+1)-h(k)\right)$
		\STATE $\gamma(j).\textrm{value} = u_j$
		\STATE r = k
		\WHILE{$r>1$ and $ \gamma(\mu(r)).\textrm{value}  \geq 
			\gamma(\mu(r-1)).\textrm{value} $ }
			\STATE {\bfseries join:} the sets containing $\mu(r)$ and $\mu(r-1)$
			\STATE $\gamma(\mu(r)).\textrm{value} = 
					\frac{1}{|\gamma(\mu(r))|} 
					\sum_{i \in \gamma(\mu(r))} u_i $
			\STATE {\bfseries set:} $r$ to the first element 
					of $\gamma(\mu(r))$ by the sort ordering
		\ENDWHILE
	\ENDFOR
	\FOR{$i=1$ {\bfseries to} $N$}
		\STATE $x_i = \gamma(i).\textrm{value}$
		\IF{ $x_i < 0$ }
			\STATE $x_j = 0\quad $ \emph{\# negative values imply shrinkage to 0}
		\ENDIF
		\IF{ $w_i < 0$ }
			\STATE $x_j = -x_j\quad$ \emph{\# Correct orthant}
		\ENDIF
	\ENDFOR
	\STATE {\bfseries return} $x$
\end{algorithmic}
\caption{\label{alg:dd-subproblem}Dual decomposition subproblem (\emph{solveSubproblem})}
\end{algorithm}
  
\section{Alternative degree priors}
\label{sec:degree-priors}
Under the restrictions on $h$ detailed in Proposition \ref{prop:h}, several other choices seem reasonable. The scale free prior can be smoothed somewhat, by the addition of a linear term, giving
\[
h_{\epsilon,\beta}(i)=\log(i+\epsilon)+\beta i,
\]
where $\beta$ controls the strength of the smoothing. A slower diminishing choice would be a square-root function such as
\[
h_{\beta}(i)=(i+1)^{\frac{1}{2}}-1+\beta i.
\]
This requires the linear term in order to correspond to a normalizable prior.

Ideally we would choose $h$ so that the expected degree distribution under the ERG model matches the particular form we wish to encourage. Finding such a $h$ for a particular graph size and degree distribution amounts to maximum likelihood parameter learning, which for ERG models is a hard learning problem. The most common approach is to use sampling based inference. Approaches based on Markov chain Monte Carlo techniques have been applied widely to ERG models \citep{erg-sampling} and are therefore applicable to our model.

\section{Related Work}
\label{sec:related-work}
The covariance selection problem has recently been addressed by \citet{reweighted-l1} using reweighted $L_{1}$ regularization. They minimize the following objective: 
\[
f(X)=\left\langle X,C\right\rangle 
-\log\det X+\alpha\sum_{v \in V}\log\left(\left\Vert X_{\neg v}\right\Vert 
	+\epsilon\right)
+\beta\sum_{v}\left| X_{vv}\right|.
\]
The regularizer is split into an off diagonal term which is designed
to encourage sparsity in the edge parameters, and a more traditional diagonal term. Essentially they use $\left\Vert X_{\neg v}\right\Vert $ as the continuous counterpart of node $v$'s degree. The biggest difficulty with this objective is the log term, which makes $f$ highly non-convex. This can be contrasted to our approach, where we start with essentially the same combinatorial prior, but we use an alternative, convex relaxation.

The reweighted $L_{1}$ \cite{reweighted-l1-boyd} aspect refers to the method of optimization applied. A double loop method is used, in the same class as EM methods and difference of convex programming, where each $L_{1}$ inner problem gives a monotonically improving lower bound on the true solution.

\section{Experiments}
\label{sec:results}
\vspace{-3mm}
\textbf{Reconstruction of synthetic networks.} We performed a comparison against the reweighted $L_{1}$ method of \citet{reweighted-l1},
and a standard $L_{1}$ regularized method, both implemented using
ADMM for optimization. Although \citet{reweighted-l1} use the glasso \citep{glasso} method for the inner loop, ADMM will give identical results, and is usually faster \cite{cov-admm}. Graphs with $60$ nodes were generated using
both the Barabasi-Albert model \citep{ba-model} and a predefined
degree distribution model sampled using the method from \citet{random-degree-graph}
implemented in the NetworkX software package.
Both methods generate scale-free graphs; the BA model exhibits a scale
parameter of 3.0, whereas we fixed the scale parameter at 2.0 for
the other model. 
To define a valid Gaussian model, edge weights of $X_{ij}=-0.2$ were assigned, and the node weights were set at $X_{ii} = 0.5-\sum_{i\neq j} X_{ij}$  so as to make the resulting precision matrix diagonally dominant. The resulting Gaussian graphical model was sampled 500 times. The covariance matrix of these samples was formed, then normalized to have diagonal uniformly 1.0. We tested with the two $h$ sequences described in section \ref{sec:degree-priors}. The parameters for the degree weight sequences were chosen by grid search on random instances separate from those we tested on. The resulting ROC curves for the Hamming reconstruction loss are shown in Figure \ref{fig:synth-roc}. Results were averaged over 30 randomly generated graphs for each each figure.

We can see from the plots that our method with the square-root weighting presents results superior to those from \citet{reweighted-l1} for these datasets. This is encouraging particularly since our formulation is convex while the one from \citet{reweighted-l1} isn't. Interestingly, the log based weights give very similar but not identical results to the reweighting scheme which also uses a log term. The only case where it gives inferior reconstructions is when it is forced to give a sparser reconstruction than the original graph.
\begin{figure}[t]
	\center
	\includegraphics[scale=0.45]{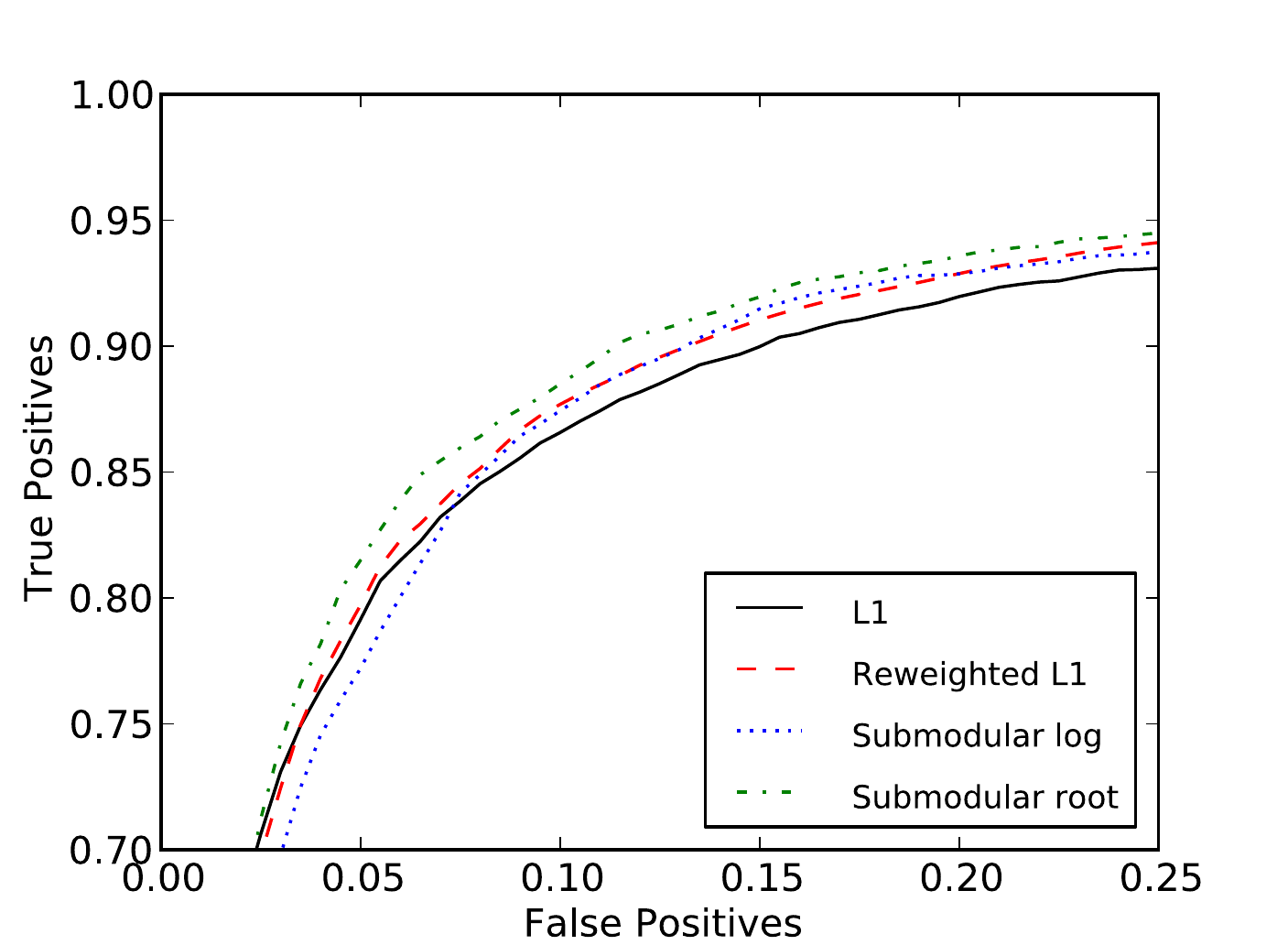}
	\hfill
	\includegraphics[scale=0.45]{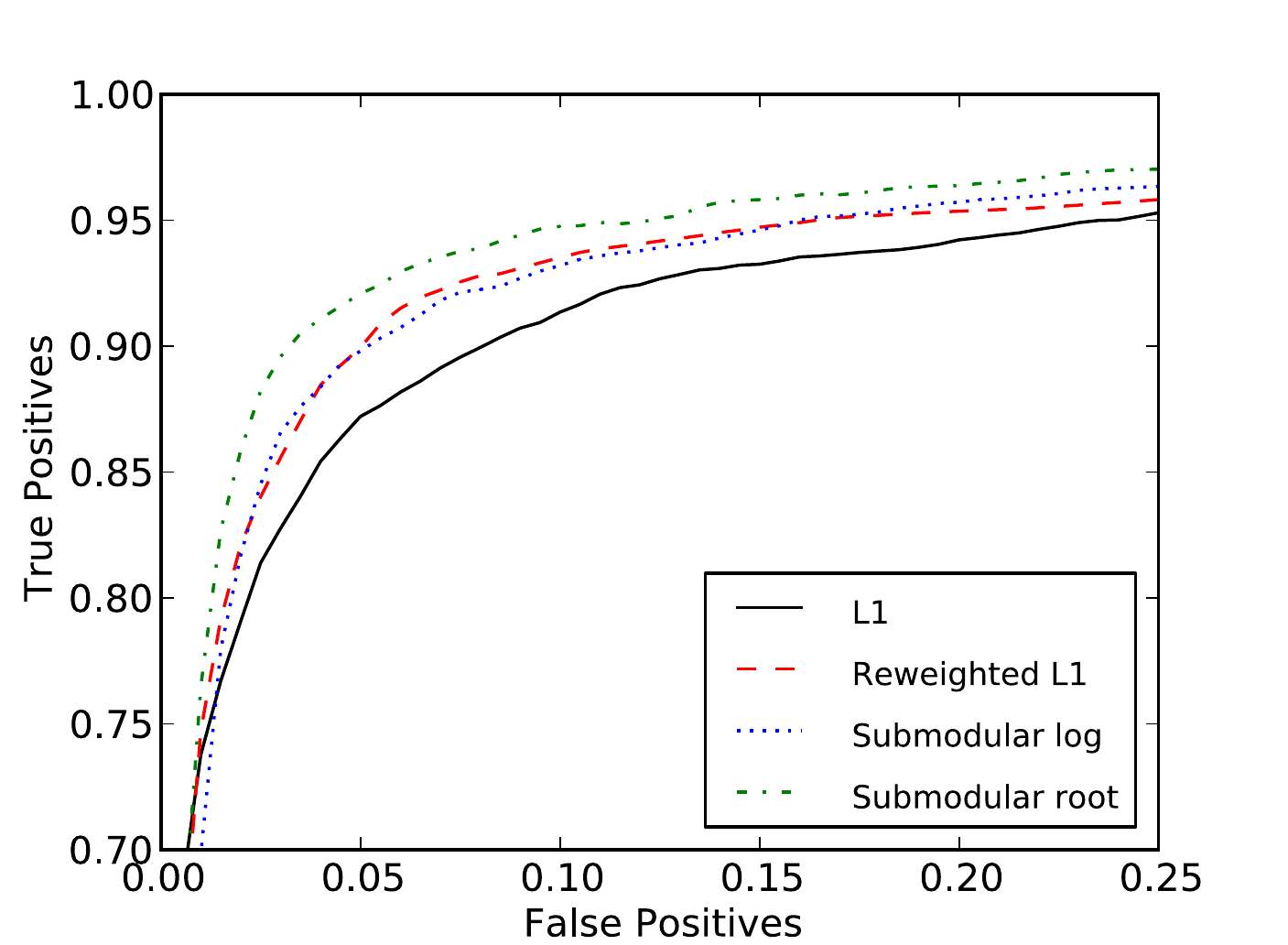}
	\caption{\small \label{fig:synth-roc}ROC curves for BA model (left)
			 and fixed degree distribution model (right)}
\end{figure}
\begin{figure}[t]
	\includegraphics[scale=0.20]{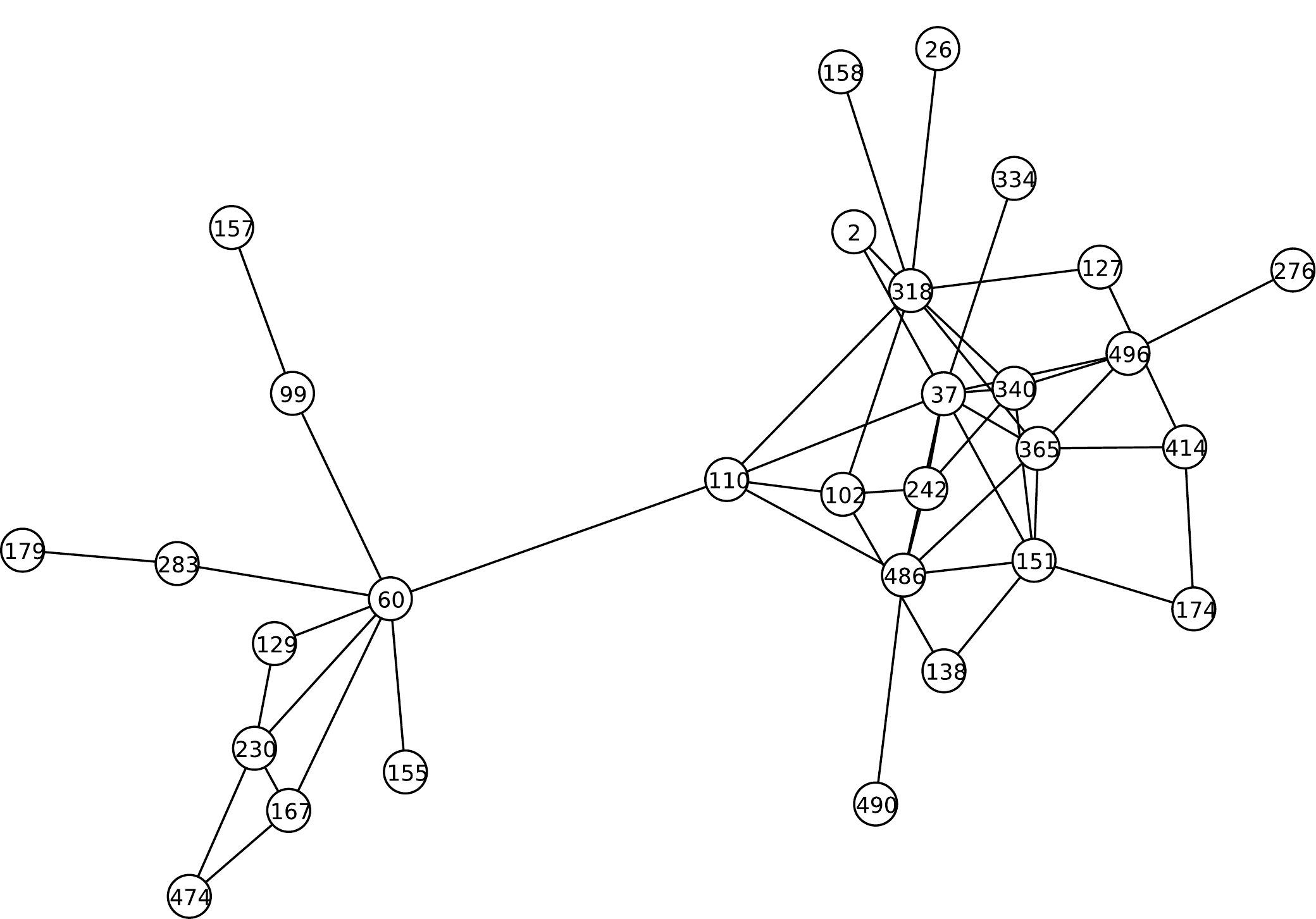}
	\hfill
	\includegraphics[scale=0.20]{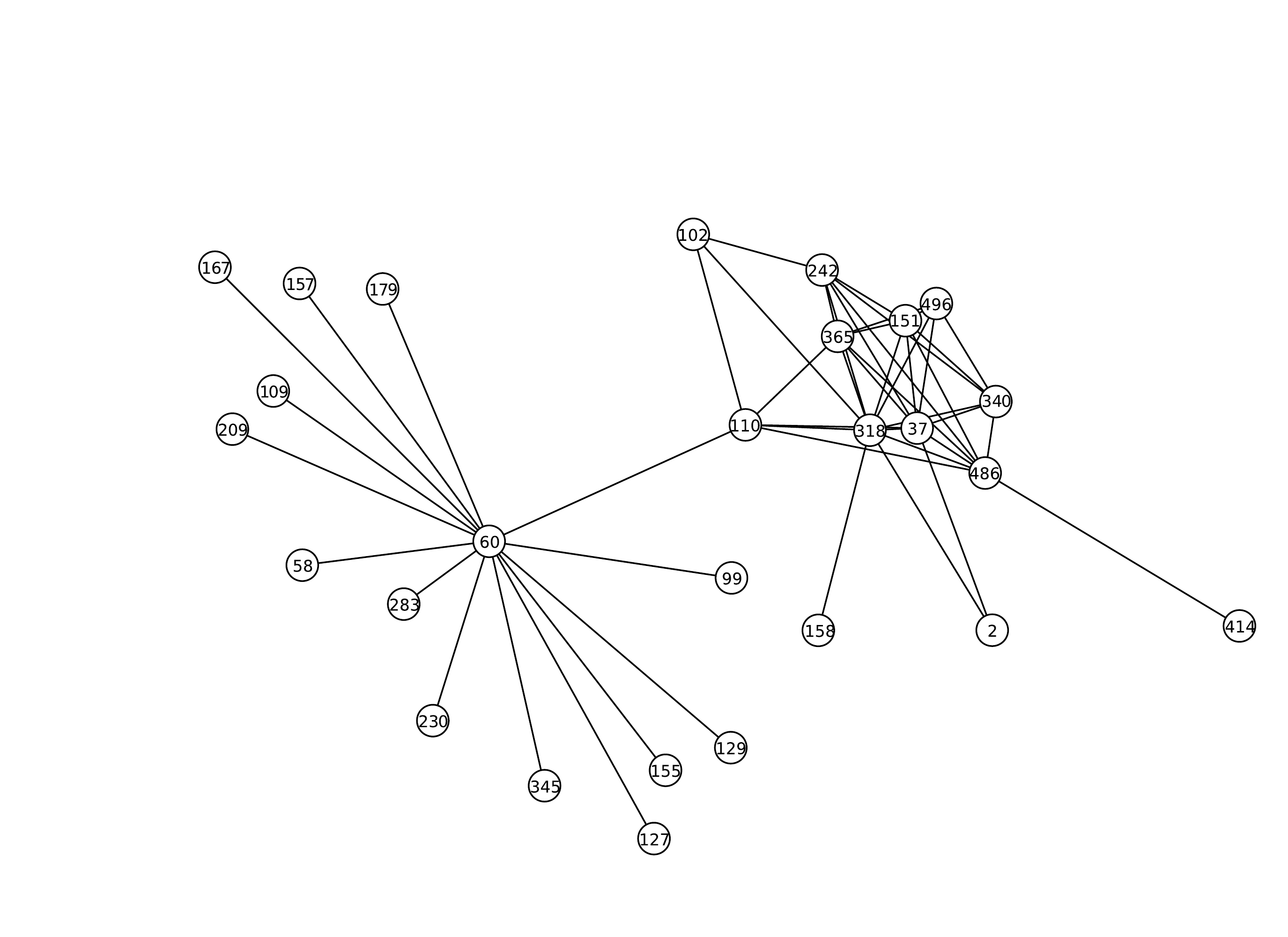}
	\hfill
	\includegraphics[scale=0.20]{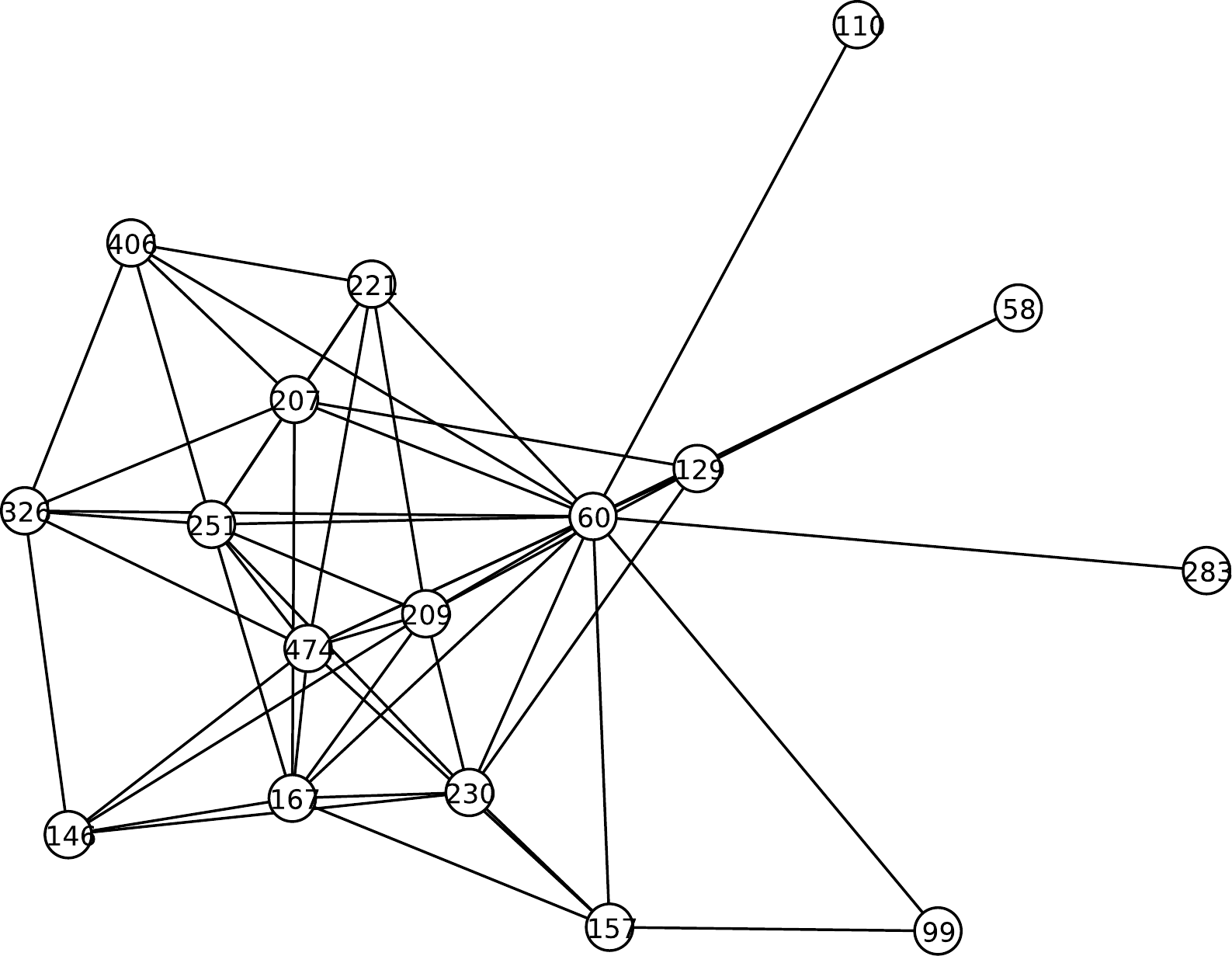}
	\caption{\small \label{fig:gene-recon} Reconstruction of a gene association network using 
			$L_1$ (left), submodular relaxation (middle), and reweighted $L_1$ (right) methods}
\end{figure}

\textbf{Reconstruction of a gene activation network.} A common application of sparse covariance selection is the estimation of gene association networks from experimental data. A covariance matrix of gene co-activations from a number of independent micro-array experiments is typically formed, on which a number of methods, including sparse covariance selection, can be applied. Sparse estimation is key for a consistent reconstruction due to the small number of experiments performed. Many biological networks are conjectured to be scale-free, and additionally ERG modelling techniques are known to produce good results on biological networks \cite{erg-gene}. So we consider micro-array datasets a natural test-bed for our method. We ran our method and the $L_1$ reconstruction method on the first 500 genes from the GDS1429 dataset (http://www.ncbi.nlm.nih.gov/gds/1429), which contains 69 samples for 8565 genes. The parameters for both methods were tuned to produce a network with near to 50 edges for visualization purposes. The major connected component for each is shown in Figure \ref{fig:gene-recon}. 
\vspace{-1mm}

While these networks are too small for valid statistical analysis of the degree distribution, the submodular relaxation method produces a network with structure that is commonly seen in scale free networks. The star subgraph centered around gene 60 is more clearly defined in the submodular relaxation reconstruction, and the tight cluster of genes in the right is less clustered in the $L_1$ reconstruction. The reweighted $L_1$ method produced a quite different reconstruction, with greater clustering.
\begin{wrapfigure}{r}{6cm}
	\includegraphics[scale=0.45]{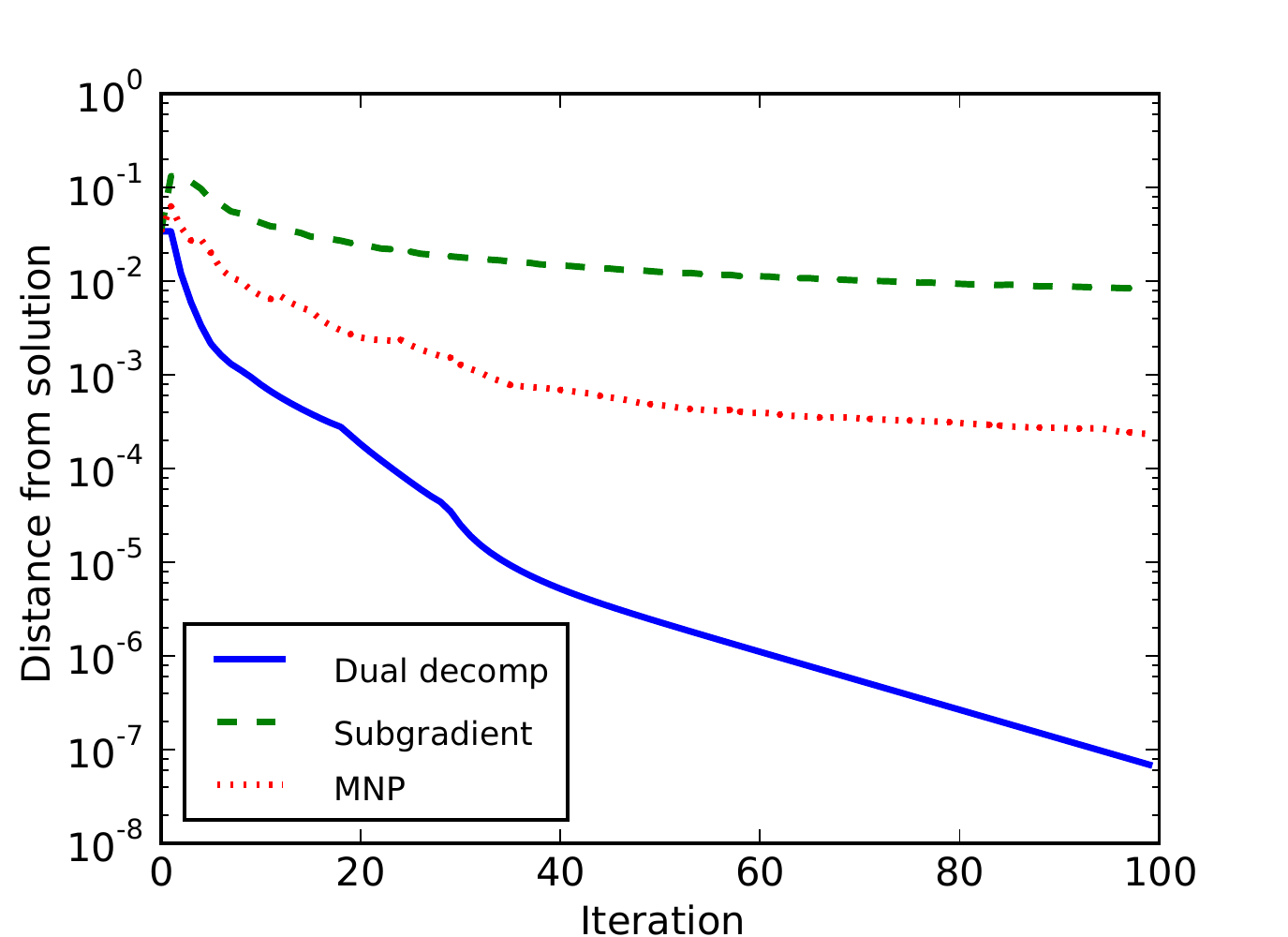}
	\caption{\small \label{fig:Comparison-of-proximal}Comparison of proximal operators}
\end{wrapfigure}
\vspace{-2mm}

\textbf{Runtime comparison: different proximal operator methods.} We performed a comparison against two other methods for computing the proximal operator: subgradient descent and the minimum norm point (MNP) algorithm. The MNP algorithm is a submodular minimization method that can be adapted for computing the proximal operator \citep{subbach}. We took the input parameters from the last invocation of the proximal operator in the BA test, at a prior strength of $0.7$. We then plotted the convergence rate of each of the methods, shown in Figure \ref{fig:Comparison-of-proximal}. As the tests are on randomly generated graphs, we present only a representative example.
It is clear from this and similar tests that we performed that the subgradient descent method converges too slowly to be of practical applicability for this
problem. Subgradient methods can be a good choice when only a low accuracy solution is required; for convergence of ADMM the error in the proximal operator needs to be smaller than what can be obtained by the subgradient method. The MNP method also converges slowly for this problem, however it achieves a low but usable accuracy quickly enough that it could be used in practice. The dual decomposition method achieves a much better rate of convergence, converging quickly enough to be of use even for strong accuracy requirements.

The time for individual iterations of each of the methods was $0.65$ms for subgradient descent, $0.82$ms for dual decomposition and $15$ms for the MNP method. The speed difference is small between a subgradient iteration and a dual decomposition iteration as both are dominated by the cost of a sort operation. The cost of a MNP iteration is dominated by two least squares solves, whose running time in the worst case is proportional to the square of the current iteration number. Overall, it is clear that our dual decomposition method is significantly more efficient.

\textbf{Runtime comparison: submodular relaxation against other approaches.} The running time of the three methods we tested is highly dependent on implementation details, so the following speed comparison should be taken as a rough guide. For a sparse reconstruction of a BA model graph with 100 vertices and 200 edges, the average running time per $10^{-4}$ error reconstruction over 10 random graphs was $16$ seconds for the reweighted $L_1$ method and $5.0$ seconds for the submodular relaxation method. This accuracy level was chosen so that the active edge set for both methods had stabilized between iterations. For comparison, the standard $L_1$ method was significantly faster, taking only $0.72$ seconds on average.
\vspace{-5mm}
\section*{Conclusion}
\label{sec:conclusion}
\vspace{-5mm}
We have presented a new prior for graph reconstruction, which enforces the recovery of scale-free networks. This prior falls within the growing class of structured sparsity methods. Unlike previous approaches to regularizing the degree distribution, our proposed prior is convex, making training tractable and convergence predictable. Our method can be directly applied in contexts where sparse covariance selection is currently used, where it may improve the reconstruction quality.
\vspace{-5mm}
\section*{Acknowledgements}
\vspace{-5mm}
NICTA is funded by the Australian Government as represented by the Department of Broadband, Communications and the Digital Economy and the Australian Research Council through the ICT Centre of Excellence program.

\small
\bibliographystyle{plainnat}
\bibliography{dp-nips2012}

\end{document}